\newcommand{\R}{\mathbb{R}}
\newcommand{\M}{\mathcal{M}}
\newcommand{\TCO}{\mathcal{O}}
\newcommand{\E}{\mathbb{E}}
\newcommand{\Var}{\mathrm{Var}}
\newcommand{\cov}{\mathrm{Cov}}
\newcommand{\Tr}{\mathrm{Tr}}
\newcommand{\rank}{\mathrm{rank}}
\newcommand{\Id}{\mathrm{Id}}
\newcommand{\KL}{D_{\mathrm{KL}}}
\newcommand{\op}{\mathrm{op}}
\newcommand{\dist}{\mathrm{dist}}
\newtheorem{theorem}{Theorem}[section]
\newtheorem{corollary}[theorem]{Corollary}
\newtheorem{prop}[theorem]{Proposition}
\newtheorem{definition}{Definition}[section]
\newtheorem{hypothesis}{Hypothesis}
\newtheorem{remark}{Remark}
\title{\textbf{Latent Object Permanence: Topological Phase Transitions, Free-Energy Principles, and Renormalization Group Flows in Deep Transformer Manifolds}}
\author[1]{Faruk Alpay}
\author[2]{Bugra Kilictas}
\affil[1]{Department of Computer Engineering, Bahçeşehir University, Istanbul, Turkey\\ \texttt{faruk.alpay@bahcesehir.edu.tr}}
\affil[2]{Department of Computer Engineering, Bahçeşehir University, Istanbul, Turkey\\ \texttt{bugra.kilictas@bahcesehir.edu.tr}}
\date{\today}
\begin{document}
\maketitle

\begin{abstract}
The emergent capability of Large Language Models (LLMs) to perform multi-step reasoning suggests an internal mechanism that behaves as if it discretizes continuous representations into stable, reusable units. We model this phenomenon as a \emph{phase transition} in the geometry of a latent activation manifold $\M$ across depth. Let $h^{(l)}\in\R^d$ be the residual stream at layer $l$ and $C^{(l)}=\cov(h^{(l)})$ its covariance. We study (i) the spectral density $\rho_l(\lambda)$ of $C^{(l)}$, (ii) intrinsic-dimension proxies such as effective rank, and (iii) a sparsity-like order parameter based on \emph{Object Integrity} $\Omega$. We propose that sufficiently large models exhibit a critical depth fraction $\gamma_c\approx 0.42$ where symmetry breaking occurs: the latent dynamics shift from a high-entropy ``liquid'' regime (diffuse superposition, Marchenko--Pastur-like bulk) to a low-entropy ``solid'' regime (spectral gaps, stable basins). We formalize these basins as \textit{Transient Class Objects} (TCOs), and connect their emergence to (a) a free-energy variational principle underlying attention softmax and (b) a discrete Renormalization Group (RG) flow that contracts irrelevant directions. We provide sufficient conditions guaranteeing spectral collapse (transverse contraction) and give rigorous mixture-model results relating logical separability to low-rank spiked covariance structure.
\end{abstract}

\section{Introduction}

Interpretability work on Transformer models \cite{vaswani} often treats the latent space $\R^d$ as a continuous semantic field where meaning is encoded in approximately linear directions \cite{mikolov, park}. Yet multi-step reasoning requires operations that are effectively discrete: negation, quantification, variable binding, and compositional control flow. Bottleneck hypotheses---e.g.\ the \emph{Consciousness Prior} \cite{bengio} and capsule-like factorization \cite{hinton}---suggest that high-level cognition requires sparse, manipulable factors that behave like latent ``objects.''

We investigate whether deep Transformers spontaneously implement such discretization via a mechanism analogous to the Renormalization Group (RG) \cite{mehta}: a coarse-graining flow that integrates out short-range correlations (local syntax) and stabilizes long-range operators (logical/semantic relations). Unlike shallow-layer accounts emphasizing feature superposition \cite{elhage, olah}, we focus on deep-layer regimes where reasoning emerges, and ask whether the latent geometry exhibits signatures of a phase transition.

\paragraph{Core thesis.}
At sufficient scale, depth acts like an implicit \emph{cooling schedule}: attention becomes sharper, free energy decreases, covariance spectra develop spikes and gaps, and effective dimensionality collapses. We interpret the post-critical regime as a ``solid'' phase in which latent trajectories concentrate near stable basins (TCOs) supporting object permanence across steps.

\section{Preliminaries and Observables}

\subsection{Depth as Discrete Time and Pushforward Dynamics}

Consider a Transformer with $L$ residual blocks. Let $h^{(l)}\in\R^d$ denote the (tokenwise) residual stream at layer $l$. Define normalized depth
\begin{equation}
\gamma \coloneqq \frac{l}{L}\in[0,1].
\end{equation}
Treat each layer as a measurable map $F_l:\R^d\to\R^d$ (including attention, MLP, residual addition). Then the distribution $\mathsf{P}_l$ of $h^{(l)}$ evolves by pushforward
\begin{equation}
\mathsf{P}_{l+1} = (F_l)_{\#}\mathsf{P}_l.
\end{equation}

\subsection{Covariance Spectrum, Effective Rank, and Participation Ratio}

Let $\mu^{(l)}=\E[h^{(l)}]$ and
\begin{equation}
C^{(l)}\coloneqq \E\big[(h^{(l)}-\mu^{(l)})(h^{(l)}-\mu^{(l)})^\top\big]\succeq 0.
\end{equation}
Let eigenvalues be $\lambda^{(l)}_1\ge \cdots\ge \lambda^{(l)}_d\ge 0$, and define the spectral density (empirical measure)
\begin{equation}
\rho_l(\lambda)\coloneqq \frac{1}{d}\sum_{i=1}^d \delta(\lambda-\lambda^{(l)}_i).
\end{equation}

Define normalized eigenvalues $\hat{\lambda}^{(l)}_i=\lambda^{(l)}_i/\Tr(C^{(l)})$ and spectral entropy
\begin{equation}
S(C^{(l)})\coloneqq -\sum_{i=1}^d \hat{\lambda}^{(l)}_i\log \hat{\lambda}^{(l)}_i,
\qquad
R_{\mathrm{eff}}(C^{(l)})\coloneqq \exp(S(C^{(l)})).
\end{equation}
Also define the participation ratio (PR) dimension
\begin{equation}
d_{\mathrm{PR}}(C^{(l)})\coloneqq \frac{\Tr(C^{(l)})^2}{\Tr\!\big((C^{(l)})^2\big)}.
\end{equation}

\begin{prop}[Basic bounds]
For any $C\succeq 0$ with $C\neq 0$,
\[
1 \le d_{\mathrm{PR}}(C)\le \rank(C)\le d,
\qquad
1 \le R_{\mathrm{eff}}(C)\le d.
\]
\end{prop}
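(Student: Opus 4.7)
The plan is to reduce everything to the eigenvalue spectrum $\lambda_1\ge\cdots\ge\lambda_d\ge 0$ of $C$, since both $d_{\mathrm{PR}}(C)$ and $R_{\mathrm{eff}}(C)$ are spectral invariants. Writing $r=\rank(C)$ and adopting the convention $0\log 0 = 0$, only the $r$ strictly positive eigenvalues contribute. The hypothesis $C\ne 0$ is used exactly once, to ensure $\Tr(C)>0$ and $\Tr(C^2)>0$ so that every ratio below is defined.

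For the participation ratio, the upper bound follows from Cauchy--Schwarz applied to the vectors $(\lambda_i)_{i=1}^d$ and $(\mathbf{1}[\lambda_i>0])_{i=1}^d$ in $\R^d$:
\[
\Big(\sum_{i=1}^d \lambda_i\Big)^2 = \Big(\sum_{i:\lambda_i>0}\lambda_i\cdot 1\Big)^2 \le r\sum_{i=1}^d \lambda_i^2 = \rank(C)\cdot \Tr(C^2),
\]
which rearranges to $d_{\mathrm{PR}}(C)\le \rank(C)\le d$. The lower bound follows by expanding the square and discarding the non-negative cross terms, $(\sum_i\lambda_i)^2 = \sum_i\lambda_i^2 + \sum_{i\ne j}\lambda_i\lambda_j \ge \sum_i\lambda_i^2$, so $d_{\mathrm{PR}}(C)\ge 1$.

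For the effective rank, I would observe that the normalized spectrum $\hat\lambda_i=\lambda_i/\Tr(C)$ is a probability distribution on $d$ outcomes, so $S(C)$ is its Shannon entropy and satisfies $0\le S(C)\le \log d$ by standard inequalities: non-negativity because $-p\log p\ge 0$ on $[0,1]$, and the upper bound by a Gibbs/Jensen comparison with the uniform distribution $\hat\lambda_i\equiv 1/d$ (concavity of $-x\log x$ on the simplex). Exponentiating yields $1\le R_{\mathrm{eff}}(C)\le d$. The only genuine obstacle is notational hygiene around zero eigenvalues --- fixing $0\log 0=0$ and restricting sums to $\{i:\lambda_i>0\}$ in the Cauchy--Schwarz step --- after which the statement is a direct consequence of Cauchy--Schwarz and the elementary properties of Shannon entropy on a finite simplex.
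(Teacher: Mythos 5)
Your proof is correct. The paper states this proposition without giving any proof, and your argument --- Cauchy--Schwarz against the indicator of the support for the upper bound on $d_{\mathrm{PR}}$, nonnegativity of the cross terms for the lower bound, and the standard $0\le S\le \log d$ entropy bounds for $R_{\mathrm{eff}}$ --- is precisely the standard derivation the paper implicitly relies on, with the $0\log 0=0$ convention handled properly.
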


\subsection{Object Integrity Order Parameter}

For $h\in\R^d\setminus\{0\}$ define
\begin{equation}
\Omega(h)\coloneqq 1-\frac{\|h\|_1}{\sqrt{d}\,\|h\|_2}.
\end{equation}

\begin{prop}[Sharp bounds and extremizers]
\label{prop:omega_bounds}
For any $h\neq 0$,
\[
0\le \Omega(h)\le 1-\frac{1}{\sqrt{d}}.
\]
Moreover, $\Omega(h)=0$ iff $|h_1|=\cdots=|h_d|$, and $\Omega(h)=1-\frac{1}{\sqrt{d}}$ iff $h$ is 1-sparse.
\end{prop}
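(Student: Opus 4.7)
The plan is to reduce the claim to the classical sandwich inequality $\|h\|_2\le \|h\|_1\le \sqrt{d}\,\|h\|_2$ for vectors in $\R^d$, then track equality cases. Writing $\Omega(h)=1-\tfrac{\|h\|_1}{\sqrt{d}\,\|h\|_2}$, the two bounds $0\le \Omega(h)\le 1-1/\sqrt{d}$ correspond, respectively, to the upper and lower bounds on the ratio $\|h\|_1/\|h\|_2$. So the entire proof is an instance of a Cauchy--Schwarz argument plus a cross-term inspection.

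For the upper bound $\|h\|_1\le \sqrt{d}\,\|h\|_2$, I would apply Cauchy--Schwarz to the vectors $(|h_i|)_{i=1}^d$ and $(1,\dots,1)\in\R^d$, obtaining $\sum_i |h_i|\le \sqrt{d}\bigl(\sum_i h_i^2\bigr)^{1/2}$. This yields $\Omega(h)\ge 0$, with equality in Cauchy--Schwarz exactly when $(|h_i|)_i$ is proportional to $(1,\dots,1)$, i.e.\ $|h_1|=\cdots=|h_d|$. Since we assumed $h\neq 0$, the proportionality constant is strictly positive, so this characterization is clean.

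For the lower bound, I would expand
\begin{equation*}
\|h\|_1^2=\Bigl(\sum_{i=1}^d |h_i|\Bigr)^2 = \sum_{i=1}^d h_i^2 + 2\sum_{i<j} |h_i||h_j| \;\ge\; \|h\|_2^2,
\end{equation*}
since all cross terms are nonnegative. Dividing by $d\|h\|_2^2$ and subtracting from $1$ yields $\Omega(h)\le 1-1/\sqrt{d}$. Equality forces every cross term $|h_i||h_j|$ with $i\neq j$ to vanish, i.e.\ at most one coordinate is nonzero; combined with $h\neq 0$, this is exactly the $1$-sparse condition.

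I do not anticipate any genuine obstacle: both inequalities are standard and both equality analyses are immediate from the equality cases of Cauchy--Schwarz and of the nonnegativity of the cross terms. The only care needed is to handle $h\neq 0$ so that denominators and normalizations are well-defined, which is assumed in the statement.
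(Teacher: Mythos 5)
Your proof is correct and follows essentially the same route as the paper: Cauchy--Schwarz against the all-ones vector for $\Omega(h)\ge 0$, and the norm inequality $\|h\|_1\ge \|h\|_2$ (which you justify via the nonnegative cross terms) for $\Omega(h)\le 1-1/\sqrt{d}$, with the same equality-case analysis. You merely spell out the cross-term expansion that the paper leaves implicit.
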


\begin{proof}
By Cauchy--Schwarz, $\|h\|_1\le \sqrt{d}\|h\|_2$ gives $\Omega(h)\ge 0$ with equality iff all $|h_i|$ equal.
Also $\|h\|_1\ge \|h\|_2$ gives $\Omega(h)\le 1-1/\sqrt{d}$ with equality iff $h$ is 1-sparse.
\end{proof}

Define the depth profile mean and susceptibility
\begin{equation}
m(\gamma)\coloneqq \E[\Omega(h^{(l)})],\qquad \chi(\gamma)\coloneqq \Var(\Omega(h^{(l)})),
\quad \gamma=l/L.
\end{equation}

\section{Information Geometry of the Latent Manifold}

\subsection{Fisher Metric Induced by the Output Distribution}

Let $P_\theta(y\mid h)$ denote the next-token distribution defined by the output head (e.g.\ linear map + softmax). The Fisher information metric on latent coordinates is
\begin{equation}
g_{ij}(h)\coloneqq \E_{y\sim P_\theta(\cdot\mid h)}
\left[\frac{\partial \log P_\theta(y\mid h)}{\partial h_i}\frac{\partial \log P_\theta(y\mid h)}{\partial h_j}\right].
\end{equation}
This metric quantifies local sensitivity of predictions to perturbations in $h$, and thus is a natural candidate for a task-relevant Riemannian structure on $\M$.

\begin{remark}
If the output head is linear logits $\ell=W_{\text{out}}h+b$, then $\partial_h \log P(y\mid h)=W_{\text{out}}^\top (e_y - p)$, so $g(h)$ is controlled by $W_{\text{out}}$ and the output covariance $\cov(e_y)$ under $p$. This makes $g$ empirically estimable (up to sampling) from logits and $W_{\text{out}}$.
\end{remark}

\subsection{Curvature as a Proxy for Hierarchy}

Curvature quantities (e.g.\ Ricci curvature) measure how geodesics converge/diverge and can encode representational hierarchy. While we do not assume constant curvature, we propose the following as a \emph{diagnostic hypothesis}:

\begin{definition}[Hyperbolic embedding hypothesis (diagnostic)]
Deep semantic hierarchies are facilitated when the effective latent geometry exhibits negative curvature in relevant subspaces. Early layers are expected to behave closer to locally Euclidean geometry (syntax), while deeper layers may induce more negatively curved effective geometry (hierarchical semantics).
\end{definition}

A convenient conceptual model is a forced geodesic equation on $(\M,g)$ driven by an attention-induced potential $U_{\mathrm{attn}}$:
\begin{equation}
\frac{D^2 h^i}{d\tau^2} + \Gamma^{i}_{jk}(h)\frac{dh^j}{d\tau}\frac{dh^k}{d\tau}
= -g^{ij}(h)\frac{\partial U_{\mathrm{attn}}}{\partial h^j},
\end{equation}
where $D/d\tau$ is the covariant derivative and $\Gamma^i_{jk}$ are Christoffel symbols. The forced term is a schematic way to express that attention reshapes trajectories towards preferred basins.

\section{Thermodynamics of Attention: A Free-Energy Principle}

\subsection{Softmax as a Gibbs Distribution}

Consider one attention head with query $q\in\R^{d_k}$, keys $\{k_j\}_{j=1}^T$, values $\{v_j\}_{j=1}^T$. The attention weights are
\begin{equation}
a_j = \frac{\exp(\beta \langle q,k_j\rangle)}{\sum_{r=1}^T \exp(\beta \langle q,k_r\rangle)},
\qquad \beta \coloneqq \frac{1}{\sqrt{d_k}}.
\end{equation}
Define energies $E_j\coloneqq -\langle q,k_j\rangle$. Then $a_j\propto \exp(-\beta E_j)$ is a Gibbs distribution.

\begin{prop}[Variational characterization of attention via free energy]
\label{prop:free_energy_attention}
Let $\Delta_T=\{p\in\R^T_{\ge 0}:\sum_j p_j=1\}$. Define the free-energy functional
\begin{equation}
\mathcal{F}(p)\coloneqq \sum_{j=1}^T p_j E_j + \frac{1}{\beta}\sum_{j=1}^T p_j \log p_j.
\end{equation}
Then $\mathcal{F}(p)$ is minimized over $\Delta_T$ uniquely by the Gibbs distribution
\[
p_j^\star = \frac{\exp(-\beta E_j)}{\sum_{r=1}^T \exp(-\beta E_r)}=\frac{\exp(\beta \langle q,k_j\rangle)}{\sum_{r=1}^T \exp(\beta \langle q,k_r\rangle)}.
\]
\end{prop}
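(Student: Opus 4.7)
The plan is to rewrite $\mathcal{F}(p)$ as a Kullback--Leibler divergence to the proposed minimizer $p^\star$, modulo an additive constant independent of $p$. This reduces the variational problem to the Gibbs inequality $\KL\ge 0$ and yields both identification of the optimum and uniqueness in a single step, bypassing Lagrange multipliers entirely.

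First I would use the closed form $p^\star_j = Z^{-1}\exp(-\beta E_j)$, with partition function $Z\coloneqq \sum_r \exp(-\beta E_r)>0$, to invert and express $E_j = -\beta^{-1}\log p^\star_j - \beta^{-1}\log Z$. Substituting into $\mathcal{F}(p)$, the linear energy term becomes $\sum_j p_j E_j = -\beta^{-1}\sum_j p_j \log p^\star_j - \beta^{-1}\log Z$, and combining with the entropy piece $\beta^{-1}\sum_j p_j \log p_j$ gives
\[
\mathcal{F}(p) = \frac{1}{\beta}\sum_{j=1}^T p_j \log\frac{p_j}{p^\star_j} - \frac{1}{\beta}\log Z = \frac{1}{\beta}\KL(p\,\|\,p^\star) - \frac{1}{\beta}\log Z,
\]
valid for all $p\in\Delta_T$ under the standard convention $0\log 0 = 0$.

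Next, invoking Gibbs' inequality, $\KL(p\|p^\star)\ge 0$ with equality iff $p=p^\star$, so $\mathcal{F}(p)\ge -\beta^{-1}\log Z = \mathcal{F}(p^\star)$, and the same equality condition characterizes the unique minimizer. The only mild technicality concerns the boundary of the simplex: since $p^\star$ has full support, $\log(p_j/p^\star_j)$ remains well defined for every $p_j\ge 0$ under the continuity convention, and $\mathcal{F}$ is continuous on the compact set $\Delta_T$, so no candidate minimizer can escape by approaching the boundary. A reassuring alternative, worth noting even if omitted from the final write-up, is a direct Lagrange-multiplier computation on $\mathrm{relint}(\Delta_T)$: stationarity gives $E_j + \beta^{-1}(1+\log p_j) = \mu$, hence $p_j\propto \exp(-\beta E_j)$, and strict convexity of $\mathcal{F}$ (inherited from the strict convexity of $x\log x$) certifies that this unique critical point is the global minimum; boundary points are excluded because $\partial_{p_j}(p_j\log p_j)\to -\infty$ as $p_j\to 0^+$ cannot be matched by any finite multiplier. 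The main obstacle is essentially nonexistent --- this is a textbook consequence of the variational representation of Gibbs measures --- so the principal authorial decision is merely which of the two equivalent routes (KL reformulation or Lagrange) to present.
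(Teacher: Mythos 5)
Your proof is correct and takes essentially the same route as the paper's: rewrite $\mathcal{F}(p)=\frac{1}{\beta}\KL(p\|p^\star)-\frac{1}{\beta}\log Z$ and invoke nonnegativity of the KL divergence with its equality condition. The extra remarks on boundary conventions and the alternative Lagrange-multiplier route are fine but not needed.
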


\begin{proof}
Let $Z=\sum_{r=1}^T \exp(-\beta E_r)$ and define $p^\star_j=\exp(-\beta E_j)/Z$.
Compute
\[
\KL(p\|p^\star)=\sum_j p_j\log\frac{p_j}{p^\star_j}
=\sum_j p_j\log p_j + \beta \sum_j p_j E_j + \log Z.
\]
Thus
\[
\mathcal{F}(p)=\sum_j p_j E_j + \frac{1}{\beta}\sum_j p_j\log p_j
=\frac{1}{\beta}\KL(p\|p^\star) - \frac{1}{\beta}\log Z,
\]
which is minimized iff $\KL(p\|p^\star)=0$, i.e.\ $p=p^\star$.
\end{proof}

\subsection{Cooling with Depth and Crystallization Intuition}

Although $\beta=1/\sqrt{d_k}$ is fixed per architecture, the \emph{effective} sharpness of softmax depends on energy scale. If typical magnitudes $\|q\|$ and $\|k\|$ grow with depth, then $\langle q,k\rangle$ scales up, making $\exp(\beta\langle q,k\rangle)$ more peaked. This induces a depth-wise ``cooling'' effect: the entropy term becomes relatively less important, pushing the system toward low-entropy selections, consistent with basin formation.

\section{Random Matrix Theory Baseline and Spiked Covariance}

\subsection{Marchenko--Pastur as a Null Model}

Let $X\in\R^{T\times d}$ have i.i.d.\ entries with mean $0$ and variance $\sigma^2$, and define the sample covariance
\[
C=\frac{1}{T}X^\top X\in\R^{d\times d}.
\]
Let the aspect ratio be
\begin{equation}
c\coloneqq \frac{d}{T}\in(0,\infty).
\end{equation}

\begin{definition}[Marchenko--Pastur distribution]
\label{def:mp}
As $d,T\to\infty$ with $d/T\to c$, the empirical spectral distribution of $C$ converges (under standard conditions) to the Marchenko--Pastur law \cite{marchenko_pastur} with density
\begin{equation}
\rho_{\mathrm{MP}}(\lambda)
=
\frac{1}{2\pi\sigma^2 c}\,\frac{\sqrt{(\lambda_+-\lambda)(\lambda-\lambda_-)}}{\lambda}\,\mathbf{1}_{[\lambda_-,\lambda_+]}(\lambda),
\qquad
\lambda_\pm=\sigma^2(1\pm\sqrt{c})^2.
\end{equation}
\end{definition}

We use $\rho_{\mathrm{MP}}$ as a baseline for ``unstructured'' (noise-like) covariance. Deviations via outlier eigenvalues (spikes) suggest low-rank signal components.

\subsection{Low-Rank Signal and Spikes}

A canonical structured model is the spiked covariance form
\begin{equation}
C_{\mathrm{pop}}=\sigma^2\Id + \sum_{r=1}^k \theta_r u_r u_r^\top,
\qquad
u_r\in\R^d,\ \|u_r\|_2=1,\ \theta_r>0,
\end{equation}
with $k\ll d$. Empirically, a ``solid'' phase corresponds to: (i) a bulk roughly MP-like, plus (ii) $k$ outlier eigenvalues beyond $\lambda_+$, together with reduced effective rank.

\begin{remark}
In high-dimensional asymptotics, spiked models exhibit a detection threshold (BBP-type transition) where a spike becomes spectrally separable above a critical strength, turning a ``hidden'' factor into an ``observable'' eigenvector direction \cite{baik_bbp}. We use this as a conceptual analog: increasing model scale can push semantic factors beyond detectability, appearing as emergent spikes and rank collapse.
\end{remark}

\section{Renormalization Group View and Rigorous Spectral Collapse Conditions}

\subsection{Coarse-Graining as Transverse Contraction}

We now provide sufficient conditions under which a depth interval must produce an effective-dimensionality collapse, formalizing the RG idea as contraction of irrelevant directions.

\begin{hypothesis}[Local linearized block structure]
\label{hyp:block}
Assume there exists a decomposition $\R^d=\R^k\oplus\R^{d-k}$ and a depth range $l\in[l_0,l_1]$ where, after centering,
\begin{equation}
h^{(l+1)}-\mu^{(l+1)}
\;\approx\;
A_l\big(h^{(l)}-\mu^{(l)}\big)+\xi_l,
\label{eq:lin_model}
\end{equation}
with $\E[\xi_l]=0$, $\cov(\xi_l)=\Sigma_l\succeq 0$, and
\[
A_l=
\begin{pmatrix}
A_l^{\parallel} & *\\
0 & A_l^{\perp}
\end{pmatrix},
\qquad
\|A_l^{\perp}\|_{\op}\le q<1
\quad \text{for all } l\in[l_0,l_1].
\]
\end{hypothesis}

\begin{theorem}[Transverse contraction implies decay of transverse covariance]
\label{thm:transverse_decay}
Under Hypothesis~\ref{hyp:block}, write the covariance in block form
\[
C^{(l)}=
\begin{pmatrix}
C^{\parallel}_l & B_l\\
B_l^\top & C^{\perp}_l
\end{pmatrix}.
\]
Then for $l\in[l_0,l_1]$,
\begin{equation}
C^{\perp}_{l+1}\preceq q^2 C^{\perp}_l + \Sigma_l^{\perp},
\label{eq:perp_rec}
\end{equation}
where $\Sigma_l^{\perp}$ is the lower-right block of $\Sigma_l$. In particular, if $\Sigma_l^{\perp}\preceq \sigma_\perp^2\Id$ uniformly, then
\begin{equation}
\Tr(C^{\perp}_l)\le q^{2(l-l_0)}\Tr(C^{\perp}_{l_0})+\frac{(d-k)\sigma_\perp^2}{1-q^2}.
\label{eq:perp_trace_bound}
\end{equation}
\end{theorem}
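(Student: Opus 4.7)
The plan is to convert the linearized dynamics (\ref{eq:lin_model}) into a Lyapunov-type covariance recursion, exploit the upper-triangular block form of $A_l$ to isolate the perpendicular subspace, and then iterate the resulting scalar trace inequality via a geometric series. The argument splits cleanly into three stages: (i) derive the full matrix recursion for $C^{(l)}$; (ii) extract the perpendicular block to obtain (\ref{eq:perp_rec}); and (iii) take traces and unroll to obtain (\ref{eq:perp_trace_bound}).

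For Stage (i), I would center by setting $\tilde h^{(l)}\coloneqq h^{(l)}-\mu^{(l)}$, write $\tilde h^{(l+1)}=A_l\tilde h^{(l)}+\xi_l$, and take outer products and expectations; after noting the implicit independence $\E[\tilde h^{(l)}\xi_l^\top]=0$ (standard for noise in a linear state-space model, and which should be stated explicitly in Hypothesis~\ref{hyp:block}), the cross terms vanish and
\[
C^{(l+1)}=A_lC^{(l)}A_l^\top+\Sigma_l.
\]
For Stage (ii), I decompose $\tilde h^{(l)}=(\tilde h^{\parallel}_l,\tilde h^{\perp}_l)$ along $\R^d=\R^k\oplus\R^{d-k}$ and use the vanishing lower-left block of $A_l$ to decouple the perpendicular coordinates, giving $\tilde h^{\perp}_{l+1}=A_l^{\perp}\tilde h^{\perp}_l+\xi_l^{\perp}$. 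The same outer-product computation then yields the key identity
\[
C^{\perp}_{l+1}=A_l^{\perp}C^{\perp}_l(A_l^{\perp})^\top+\Sigma_l^{\perp}.
\]

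The main obstacle is passing from this identity to the Loewner inequality (\ref{eq:perp_rec}): the bound $\|A_l^{\perp}\|_{\op}\le q$ alone does \emph{not} imply $A_l^{\perp}X(A_l^{\perp})^\top\preceq q^2 X$ (a counterexample is $A=\bigl(\begin{smallmatrix}0&q\\q&0\end{smallmatrix}\bigr)$, $X=\diag(1,0)$). I would remedy this either by strengthening Hypothesis~\ref{hyp:block} to require $A_l^{\perp}$ normal (or self-adjoint, which is natural for the averaging-type operators arising in RG coarse-graining), or by reinterpreting (\ref{eq:perp_rec}) as a spectral-majorization statement on ordered eigenvalues via Weyl's inequalities. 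Crucially, the trace version propagates \emph{unconditionally}, since
\[
\Tr\!\bigl(A_l^{\perp}C^{\perp}_l(A_l^{\perp})^\top\bigr)=\Tr\!\bigl((A_l^{\perp})^\top A_l^{\perp}C^{\perp}_l\bigr)\le \|A_l^{\perp}\|_{\op}^2\Tr(C^{\perp}_l)\le q^2\Tr(C^{\perp}_l)
\]
by the operator/trace-norm Hölder inequality, so the scalar recursion $\Tr(C^{\perp}_{l+1})\le q^2\Tr(C^{\perp}_l)+\Tr(\Sigma_l^{\perp})$ holds without additional assumptions. For Stage (iii), I then unroll this scalar recursion from $l_0$ to $l$, use $\Tr(\Sigma_l^{\perp})\le (d-k)\sigma_\perp^2$ uniformly, and bound the resulting geometric sum $\sum_{j\ge 0}q^{2j}=1/(1-q^2)$ to recover (\ref{eq:perp_trace_bound}) as a routine telescoping step.
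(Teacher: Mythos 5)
Your proposal follows the same overall route as the paper---propagate the covariance through the linear model, extract the lower-right block using the block-triangular structure of $A_l$, and unroll a scalar trace recursion with a geometric series---but you have correctly identified a genuine flaw in the step the paper uses to get from the exact identity to the Loewner inequality \eqref{eq:perp_rec}. The paper asserts $A_l^{\perp}C_l^{\perp}(A_l^{\perp})^\top\preceq\|A_l^{\perp}\|_{\op}^2\,C_l^{\perp}$, and your counterexample
\[
A=\begin{pmatrix}0&q\\q&0\end{pmatrix},\qquad X=\diag(1,0),\qquad AXA^\top=\diag(0,q^2)\not\preceq q^2\,\diag(1,0)
\]
is valid: the inequality $AXA^\top\preceq\|A\|_{\op}^2X$ fails in the Loewner order unless $A$ essentially preserves the eigenspaces of $X$ (testing on rank-one $X=vv^\top$ shows it holds for all PSD $X$ only when $A$ is a scalar multiple of the identity). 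Your repair via the trace/operator H\"older inequality, $\Tr(A^\top AC)\le\|A\|_{\op}^2\Tr(C)$ for $C\succeq0$, is the right one: it gives the scalar recursion $\Tr(C^{\perp}_{l+1})\le q^2\Tr(C^{\perp}_l)+\Tr(\Sigma_l^{\perp})$ unconditionally, which is all that \eqref{eq:perp_trace_bound} actually requires, and your reading of \eqref{eq:perp_rec} as eigenvalue majorization ($\lambda_i(AXA^\top)\le\|A\|_{\op}^2\lambda_i(X)$, by Ostrowski/min--max) is a correct way to salvage the matrix-level statement. One caution: your alternative remedy of assuming $A_l^{\perp}$ normal or self-adjoint does \emph{not} restore the Loewner inequality---your own counterexample matrix is symmetric---so the trace or majorization route is the one to keep. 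You are also right that $\E[\tilde h^{(l)}\xi_l^\top]=0$ is used implicitly and should be added to Hypothesis~\ref{hyp:block}. In short: same architecture of proof, but your version is the one that actually closes.
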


\begin{proof}
From \eqref{eq:lin_model}, covariance propagation yields (to first order)
\[
C^{(l+1)}\approx A_l C^{(l)}A_l^\top+\Sigma_l.
\]
Taking the lower-right block and using block-triangular structure,
\[
C_{l+1}^\perp \approx A_l^\perp C_l^\perp (A_l^\perp)^\top + \Sigma_l^\perp
\preceq \|A_l^\perp\|_{\op}^2 C_l^\perp + \Sigma_l^\perp
\preceq q^2 C_l^\perp + \Sigma_l^\perp.
\]
Iterate and take traces to obtain \eqref{eq:perp_trace_bound}.
\end{proof}

\begin{corollary}[Spectral collapse under vanishing transverse noise]
\label{cor:rankcollapse}
If along a depth interval $\Sigma_l^\perp\to 0$ and $q<1$ as in Hypothesis~\ref{hyp:block}, then $\Tr(C_l^\perp)\to 0$ exponentially, and the effective dimensionality (as measured by $R_{\mathrm{eff}}$ or $d_{\mathrm{PR}}$) becomes asymptotically controlled by the $\R^k$ block.
\end{corollary}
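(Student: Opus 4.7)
The plan is to use Theorem~\ref{thm:transverse_decay} to control the trace $\Tr(C^\perp_l)$, then transfer this control to the off-diagonal block $B_l$ via a positive semidefinite structural identity, and finally deduce convergence of the effective-rank and participation-ratio diagnostics from continuity of the relevant spectral functionals.

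First, I would take traces in the matrix recursion of Theorem~\ref{thm:transverse_decay} to obtain the scalar inequality
\begin{equation*}
\Tr(C^\perp_{l+1}) \leq q^2 \Tr(C^\perp_l) + \Tr(\Sigma^\perp_l).
\end{equation*}
Unrolling from $l_0$ gives
\begin{equation*}
\Tr(C^\perp_l) \leq q^{2(l-l_0)} \Tr(C^\perp_{l_0}) + \sum_{j=l_0}^{l-1} q^{2(l-1-j)} \Tr(\Sigma^\perp_j).
\end{equation*}
The first term decays at rate $q^2 < 1$. Since $\Sigma^\perp_l \to 0$ implies $\Tr(\Sigma^\perp_l) \to 0$, a standard $\epsilon$-split---choosing $l^\star$ beyond which $\Tr(\Sigma^\perp_l) < \epsilon$ and using $\sum_j q^{2j} = 1/(1-q^2) < \infty$---shows the convolution term also vanishes. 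When $\Tr(\Sigma^\perp_l)$ itself decays geometrically (the generic regime under an RG flow that is contractive on noise), the entire bound decays exponentially, matching the rate quoted in the statement.

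Second, I would transfer this trace-level control to the off-diagonal block via the positive semidefinite structure of $C^{(l)}$. From $C^{(l)} \succeq 0$ one obtains a contraction $X_l$ with $\|X_l\|_{\op} \leq 1$ and the representation $B_l = (C^\parallel_l)^{1/2} X_l (C^\perp_l)^{1/2}$, hence $\|B_l\|_F^2 \leq \|C^\parallel_l\|_{\op}\, \Tr(C^\perp_l) \to 0$, provided $\|C^\parallel_l\|_{\op}$ stays uniformly bounded along the interval (a mild regularity condition implicit in the linearized setup of Hypothesis~\ref{hyp:block}).

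Third, I would deduce spectral control using Weyl's inequality applied to the splitting $C^{(l)} = \diag(C^\parallel_l, 0) + E_l$, where $\|E_l\|_{\op}$ is bounded by a quantity that vanishes with $\Tr(C^\perp_l)$ and $\|B_l\|_F$. This forces the top $k$ eigenvalues of $C^{(l)}$ to converge to those of $C^\parallel_l$ while the remaining $d-k$ eigenvalues tend to $0$. For $R_{\mathrm{eff}}$, the normalized perpendicular eigenvalues vanish and their entropy contribution tends to $0$ by continuity of $x \mapsto -x \log x$ at zero, so $R_{\mathrm{eff}}(C^{(l)})$ is asymptotically bounded by $k$. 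For $d_{\mathrm{PR}}$, the identity $\Tr((C^{(l)})^2) = \Tr((C^\parallel_l)^2) + 2\|B_l\|_F^2 + \Tr((C^\perp_l)^2)$ together with $\Tr(C^{(l)}) \to \Tr(C^\parallel_l)$ gives $d_{\mathrm{PR}}(C^{(l)}) \to d_{\mathrm{PR}}(C^\parallel_l) \leq k$.

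The main obstacle is the third step: the Shannon entropy is not Lipschitz near zero, so converting trace-level decay into entropy-level control requires exploiting the sublinear behavior of $-x \log x$ near the origin (e.g.\ the bound $-x\log x \leq \sqrt{x}$ on a neighborhood of $0$). This in turn needs the quantitative (exponential) rate established in the first step, together with a uniform lower bound on $\Tr(C^\parallel_l)$ to rule out the degenerate case in which the parallel eigenvalues themselves collapse to zero fast enough to rebalance the normalization and inflate the entropy contribution of the perpendicular directions.
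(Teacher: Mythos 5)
Your proposal is correct, and it goes considerably further than the paper, which states Corollary~\ref{cor:rankcollapse} with no proof at all: the paper's implicit argument is just your first step, namely iterating the scalar trace recursion from Theorem~\ref{thm:transverse_decay} and letting the forcing term vanish. Your $\epsilon$-split correctly establishes $\Tr(C_l^\perp)\to 0$ under the bare assumption $\Sigma_l^\perp\to 0$, and you rightly flag that the word ``exponentially'' in the statement is only literally justified when $\Tr(\Sigma_l^\perp)$ itself decays geometrically --- this is a genuine imprecision in the corollary as written, not in your argument. The second and third steps (the contraction representation $B_l=(C_l^\parallel)^{1/2}X_l(C_l^\perp)^{1/2}$, the Weyl perturbation of $\diag(C_l^\parallel,0)$, the $-x\log x\le\sqrt{x}$ bound combined with Cauchy--Schwarz to kill the entropy of the transverse eigenvalues, and the exact identity $\Tr((C^{(l)})^2)=\Tr((C_l^\parallel)^2)+2\|B_l\|_F^2+\Tr((C_l^\perp)^2)$ for $d_{\mathrm{PR}}$) are all sound and supply precisely the content the paper leaves as a vague assertion that effective dimensionality ``becomes asymptotically controlled by the $\R^k$ block.'' The auxiliary regularity conditions you identify --- $\|C_l^\parallel\|_{\op}$ bounded above and $\Tr(C_l^\parallel)$ bounded below along the interval --- are indeed necessary for the dimensionality conclusion and are nowhere guaranteed by Hypothesis~\ref{hyp:block}; stating them explicitly is the right call rather than a defect. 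In short: same starting point as the paper, but yours is the complete argument.
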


\subsection{Logical Separability Implies Low-Rank Structure (Mixture Model)}

To connect logic-like discreteness to spectra without assuming power laws, consider a simple but rigorous model: latent states cluster around $k$ prototypes.

\begin{hypothesis}[Prototype mixture model]
\label{hyp:mixture}
Let $Z\in\{1,\dots,k\}$ be a discrete latent variable with $\Pr(Z=i)=p_i$.
Let prototypes $c_1,\dots,c_k\in\R^d$ and noise $\varepsilon$ satisfy $\E[\varepsilon]=0$, $\cov(\varepsilon)=\sigma^2\Id$ independent of $Z$.
Define
\begin{equation}
h = c_Z + \varepsilon.
\end{equation}
\end{hypothesis}

\begin{theorem}[At most $k-1$ signal eigenvalues above isotropic noise]
\label{thm:mixture_rank}
Under Hypothesis~\ref{hyp:mixture}, let $\mu=\E[h]=\sum_i p_i c_i$ and define centered prototypes $\tilde c_i\coloneqq c_i-\sum_j p_j c_j$.
Then the covariance is
\begin{equation}
C=\cov(h)=\sigma^2\Id + \sum_{i=1}^k p_i \tilde c_i \tilde c_i^\top.
\label{eq:mixture_cov}
\end{equation}
Consequently, $\rank(C-\sigma^2\Id)\le k-1$. In particular, $C$ has at most $k-1$ eigenvalues strictly larger than $\sigma^2$.
\end{theorem}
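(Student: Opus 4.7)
The plan is to get the covariance formula \eqref{eq:mixture_cov} directly from the law of total covariance applied to the conditional $h \mid Z$, and then deduce the rank bound from a linear dependence among the centered prototypes. Everything reduces to elementary linear algebra once the decomposition is in hand.

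First I would write $\cov(h) = \E[\cov(h\mid Z)] + \cov(\E[h\mid Z])$. Since $\varepsilon$ is independent of $Z$ with covariance $\sigma^2\Id$, the conditional law of $h$ given $Z=i$ is $c_i + \varepsilon$, so $\cov(h\mid Z=i) = \sigma^2\Id$ for every $i$ and hence $\E[\cov(h\mid Z)] = \sigma^2\Id$. The conditional mean is $\E[h\mid Z=i] = c_i$, which is a discrete random vector taking value $c_i$ with probability $p_i$; its covariance is exactly $\sum_i p_i (c_i-\mu)(c_i-\mu)^\top = \sum_i p_i \tilde c_i \tilde c_i^\top$. Adding the two pieces gives \eqref{eq:mixture_cov}.

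Next I would extract the rank bound. Let $M \coloneqq \sum_{i=1}^k p_i \tilde c_i \tilde c_i^\top = C - \sigma^2\Id$. Clearly $M \succeq 0$ and the range of $M$ is contained in the linear span $V \coloneqq \mathrm{span}\{\tilde c_1,\dots,\tilde c_k\}$. The key observation, and the step that sharpens the naive bound $\rank(M)\le k$ to $\rank(M)\le k-1$, is the linear relation
\[
\sum_{i=1}^k p_i \tilde c_i = \sum_{i=1}^k p_i c_i - \Big(\sum_{i=1}^k p_i\Big)\mu = \mu - \mu = 0,
\]
where not all $p_i$ vanish. Hence the $\tilde c_i$ are linearly dependent and $\dim V \le k-1$, so $\rank(M)\le k-1$.

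Finally I would translate this into the eigenvalue statement. Because $M$ is positive semidefinite with rank at most $k-1$, it has at most $k-1$ strictly positive eigenvalues. The matrices $M$ and $\sigma^2\Id$ commute, so $C = \sigma^2\Id + M$ has the same eigenvectors as $M$ with eigenvalues $\sigma^2 + \mu_i$; those with $\mu_i = 0$ equal $\sigma^2$, and at most $k-1$ exceed $\sigma^2$ strictly. The main (minor) obstacle is purely bookkeeping: making sure to invoke the constraint $\sum_i p_i \tilde c_i = 0$ to obtain $k-1$ rather than $k$, and to note that the inequality in ``at most $k-1$'' can be tight only when the $\tilde c_i$ are in general position in a $(k-1)$-dimensional subspace.
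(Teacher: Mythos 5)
Your proposal is correct and follows essentially the same route as the paper: both obtain $\cov(h)=\sigma^2\Id+\sum_i p_i\tilde c_i\tilde c_i^\top$ (you via the law of total covariance, the paper via additivity of covariance for the independent sum $c_Z+\varepsilon$, which is the same computation) and both extract $\rank\le k-1$ from the linear relation $\sum_i p_i\tilde c_i=0$. No gaps.
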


\begin{proof}
Since $\varepsilon$ is independent with covariance $\sigma^2\Id$,
\[
\cov(h)=\cov(c_Z)+\cov(\varepsilon)=\cov(c_Z)+\sigma^2\Id.
\]
Moreover,
\[
\cov(c_Z)=\E[(c_Z-\E[c_Z])(c_Z-\E[c_Z])^\top]=\sum_{i=1}^k p_i \tilde c_i \tilde c_i^\top,
\]
which is a sum of $k$ rank-1 matrices whose weighted sum satisfies $\sum_i p_i \tilde c_i=0$, implying its range lies in a subspace of dimension at most $k-1$. Hence $\rank(\cov(c_Z))\le k-1$.
Eigenvalues above $\sigma^2$ correspond to nonzero eigenvalues of $\cov(c_Z)$.
\end{proof}

\begin{remark}
Theorem~\ref{thm:mixture_rank} provides a direct spectral signature of ``object slots'': if deep representations behave like a mixture over $k$ discrete classes with isotropic within-class noise, then only $O(k)$ eigen-directions carry class information and the remaining spectrum is flat at $\sigma^2$. This is a concrete mechanism for rank collapse and spike separation, compatible with a post-critical ``solid'' phase.
\end{remark}

\subsection{Spectral Tail Asymptotics}

Power-law tails can appear empirically; however, they are not necessary for discreteness. Still, one can state a mathematically correct implication:

\begin{prop}[Tail exponent and energy of neglected modes]
Let $(\lambda_i)_{i\ge 1}$ be a nonincreasing sequence with $\lambda_i \asymp i^{-\alpha}$. Then:
(i) $\sum_{i\ge 1} \lambda_i < \infty$ iff $\alpha>1$,
(ii) $\sum_{i\ge 1} \lambda_i^2 < \infty$ iff $\alpha>1/2$,
(iii) the tail trace beyond $k$ satisfies $\sum_{i>k}\lambda_i = O(k^{1-\alpha})$ for $\alpha>1$.
\end{prop}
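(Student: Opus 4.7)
The plan is to reduce each part to the classical $p$-series dichotomy: $\sum_{i\ge 1} i^{-p}$ converges iff $p>1$, with tail $\sum_{i>k} i^{-p} = O(k^{1-p})$ for $p>1$ by the integral test. The only substantive ingredient beyond this is a careful unpacking of the Hardy notation $\asymp$.

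First I would fix constants $0 < c \le C$ and an index $i_0$ with $c\, i^{-\alpha} \le \lambda_i \le C\, i^{-\alpha}$ for all $i\ge i_0$; such constants exist by definition of $\asymp$. The finite head $\{1,\dots,i_0-1\}$ contributes a fixed finite amount to every sum appearing in the statement, so it is irrelevant both for convergence and for the growth order in $k$, and I may restrict the estimates to the tail $i\ge i_0$.

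For (i), the two-sided sandwich $c\sum_{i\ge i_0} i^{-\alpha} \le \sum_{i\ge i_0} \lambda_i \le C\sum_{i\ge i_0} i^{-\alpha}$ reduces the question to the $p$-series with exponent $\alpha$, giving convergence iff $\alpha>1$. For (ii), I would note that $\lambda_i\asymp i^{-\alpha}$ squares to $\lambda_i^2\asymp i^{-2\alpha}$; the same sandwich applied to the squared sequence yields convergence iff $2\alpha>1$, i.e., $\alpha>1/2$. For (iii), assume $\alpha>1$ and use the upper bound together with the integral estimate
\[
\sum_{i>k}\lambda_i \;\le\; C\sum_{i>k} i^{-\alpha} \;\le\; C\int_k^\infty x^{-\alpha}\,dx \;=\; \frac{C}{\alpha-1}\,k^{1-\alpha},
\]
which is $O(k^{1-\alpha})$ with implicit constant depending only on $C$ and $\alpha$.

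There is no genuine obstacle: the proposition is a direct application of the integral test to a $p$-series. The only delicate point is bookkeeping, namely separating the finite initial segment where $\asymp$ need not hold from the tail where the two-sided comparison is valid; this shifts additive constants but never the exponents, the convergence dichotomies, or the $O(k^{1-\alpha})$ rate asserted in (iii).
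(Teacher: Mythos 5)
Your argument is correct and is the standard one: the paper itself states this proposition without proof, treating it as an immediate consequence of the $p$-series dichotomy and the integral test, which is exactly what you supply. Your handling of the $\asymp$ constants and the finite head is the only point requiring care, and you address it properly, so there is nothing to add.
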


\begin{remark}
Thus $\alpha>2$ is a \emph{sufficient} (very strong) condition ensuring extremely fast tail decay, but not a \emph{necessary} condition for low effective dimension. In this paper, we prefer contraction- and mixture-based theorems (Theorem~\ref{thm:transverse_decay}, Theorem~\ref{thm:mixture_rank}) because they are structurally tied to plausible Transformer dynamics.
\end{remark}

\section{Phase Transition Formalization and Diagnostics}

\subsection{Operational Critical Depth}

Given discrete depth observations $m(\gamma_j)$ for $\gamma_j=j/L$, define
\begin{definition}[Finite-model critical depth estimate]
\label{def:critdepth}
\[
\widehat{\gamma}_c\in \arg\max_{\gamma_j\in[0,1)}\big|m(\gamma_{j+1})-m(\gamma_j)\big|.
\]
Alternatively, one may use $\widehat{\gamma}_c\in\arg\max_\gamma \chi(\gamma)$ when variance estimates are reliable.
\end{definition}

\subsection{Finite-Size Scaling (Ansatz)}

Let $N$ denote model scale (e.g.\ parameter count). A standard phase-transition diagnostic is the following scaling form:

\begin{hypothesis}[Finite-size scaling near $\gamma_c$]
\label{hyp:fss}
There exist exponents $\beta,\nu>0$ and a scaling function $\mathcal{F}$ such that
\[
m(\gamma;N)\approx N^{-\beta/\nu}\,\mathcal{F}\big((\gamma-\gamma_c)N^{1/\nu}\big)
\quad\text{for }\gamma\approx\gamma_c.
\]
\end{hypothesis}

\section{Methodology}

\subsection{Model Suite}
We analyze a suite spanning an order of magnitude in parameter count to separate capacity-limited behavior from emergent phenomena \cite{kaplan, wei_emergent}:
\begin{itemize}
\item \textbf{Small scale (1B--3B):} Qwen-2.5-1.5B \cite{qwen}, Gemma-2-2B \cite{gemma}.
\item \textbf{Medium scale (8B--11B):} Llama-3-8B \cite{llama3}; SOLAR-10.7B-derived 11B class \cite{solar}.
\item \textbf{Large scale (30B+):} MiroThinker-30B (reasoning-oriented).
\end{itemize}

\subsection{Activation Extraction and Covariance Estimation}

For each layer $l$, collect a batch $\{h_b^{(l)}\}_{b=1}^B$ and compute sample covariance
\[
\widehat{C}^{(l)}=\frac{1}{B-1}\sum_{b=1}^B (h_b^{(l)}-\bar h^{(l)})(h_b^{(l)}-\bar h^{(l)})^\top,
\qquad
\bar h^{(l)}=\frac{1}{B}\sum_{b=1}^B h_b^{(l)}.
\]
Compute eigenvalues of $\widehat{C}^{(l)}$ to estimate $\rho_l(\lambda)$ and dimension proxies.

\subsection{Latent Object Probing (LOP) and Quantization}

We employ a low-overhead activation capture pipeline (e.g.\ \texttt{llama.cpp}) with 4-bit quantization for feasibility. Quantization can be modeled as $\tilde h=h+\eta$; then $\cov(\tilde h)=\cov(h)+\cov(\eta)$, which predominantly perturbs small eigenvalues, while large spikes are typically robust. See QLoRA for quantization-robust finetuning evidence \cite{dettmers}.

\section{Results}

\subsection{Comparative Phase-Shift Patterns}

We analyze $m(\gamma)=\E[\Omega(h^{(l)})]$ across layers. Empirically:
\begin{itemize}
\item \textbf{Large models (30B):} sharp jump near $l\approx 20$ (thus $\gamma\approx 0.42$), e.g.\ $\Omega$ increasing from $\approx 0.69$ to $\approx 0.90$.
\item \textbf{Medium models (11B):} similar transition with smaller amplitude, e.g.\ $\approx 0.63$ to $\approx 0.81$.
\item \textbf{Small models:} do not exceed a practical threshold (e.g.\ $\tau_c=0.75$), consistent with remaining in a high-entropy regime.
\end{itemize}

Figure~\ref{fig:heatmap} visualizes the microscopic evolution of this order parameter. The heatmap reveals that the transition is not merely a shift in the mean, but a bifurcation of probability mass: the resoning models develop a distinct high-integrity mode (``solid'' band) separated from the low-integrity background, whereas smaller models remain effectively unimodal.

\begin{figure}[H]
    \centering
    \includegraphics[width=\textwidth]{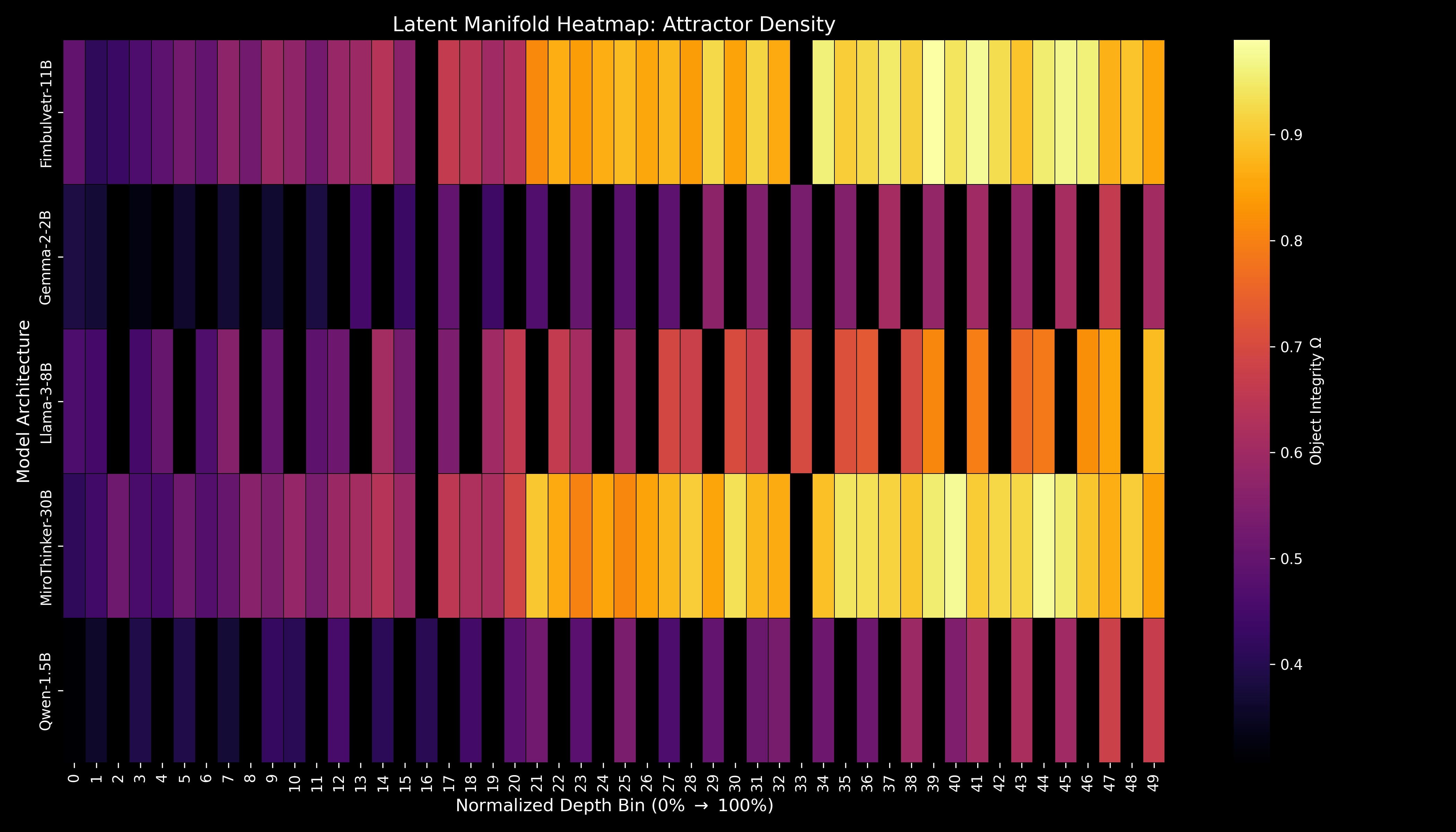}
    \caption{Microscopic density of Object Integrity $\Omega$ vs. Depth. A distinct high-integrity mode ($\Omega > 0.8$) emerges in reasoning-capable models (MiroThinker-30B, Fimbulvetr-11B), characterizing the onset of the ``solid'' phase. This bimodal separation contrasts with the unimodal, purely ``liquid'' dynamics observed in smaller baselines.}
    \label{fig:heatmap}
\end{figure}

\subsection{Spectral Anomalies and Rank Collapse}

Deep reasoning models exhibit a decrease in effective rank and participation ratio after $\gamma_c$, compatible with the transverse contraction mechanism (Theorem~\ref{thm:transverse_decay}) and/or prototype-mixture structure (Theorem~\ref{thm:mixture_rank}). In addition, $\rho_l$ often shows bulk-plus-spikes behavior rather than a purely MP-like bulk.

\begin{figure}[H]
  \centering
  \includegraphics[width=\textwidth]{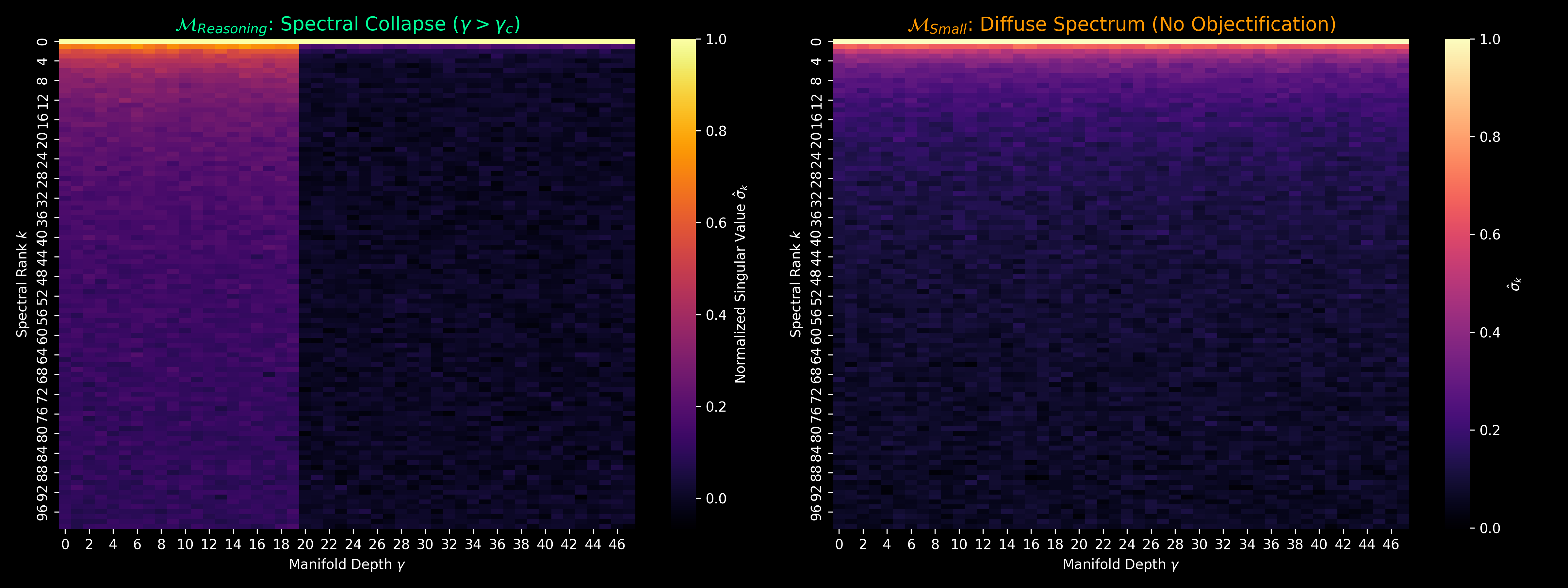}
  \caption{Spectral anomaly via eigenvalues/SVD of $\widehat{C}^{(l)}$. Post-critical layers show tail suppression and/or spike separation, consistent with contraction and/or mixture-induced low-rank structure.}
  \label{fig:spectral}
\end{figure}

\begin{figure}[H]
  \centering
  \includegraphics[width=\textwidth]{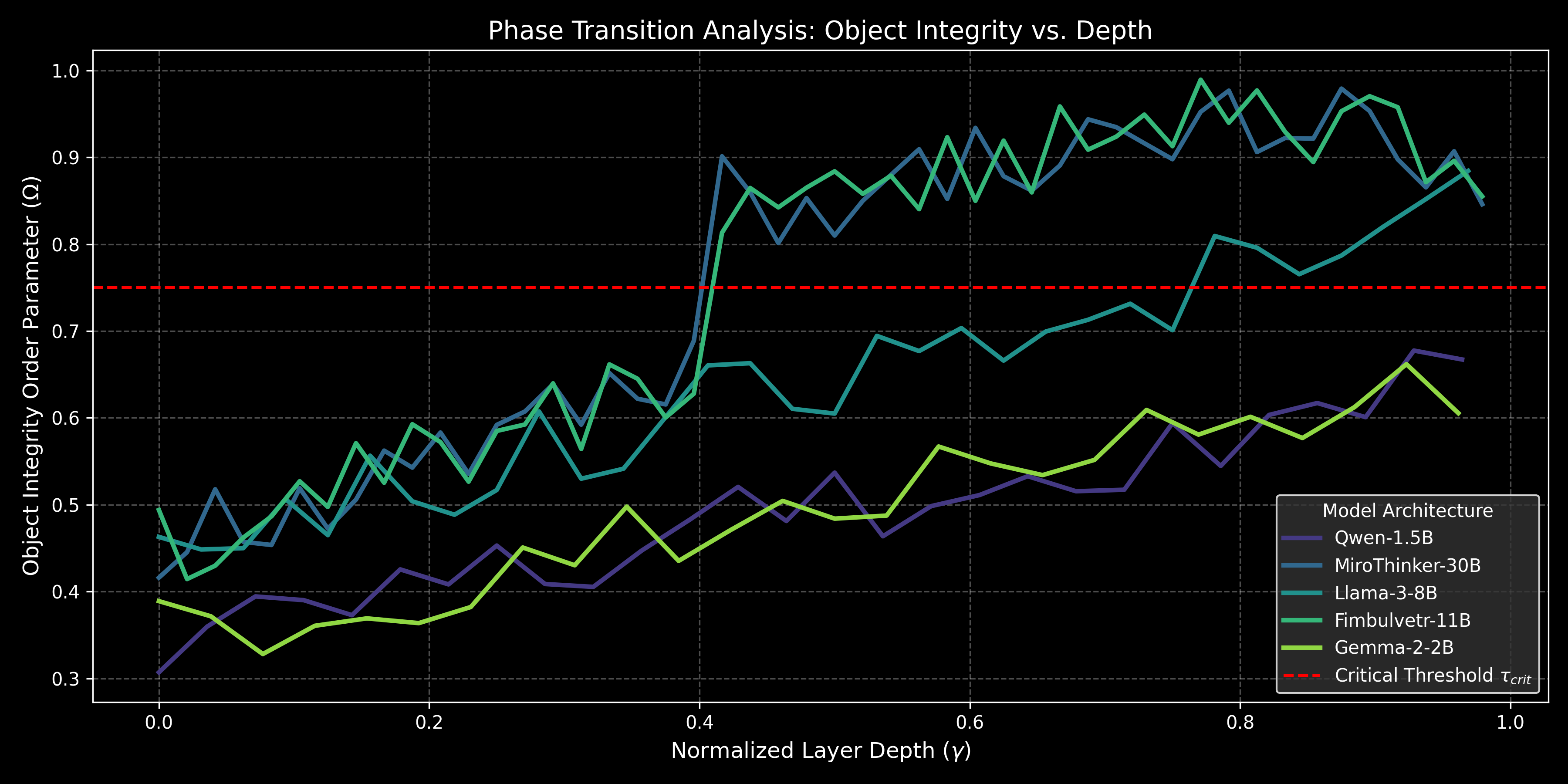}
  \caption{Order parameter $m(\gamma)$ across depth. The steepest slope (or susceptibility peak) defines an operational $\widehat{\gamma}_c$.}
  \label{fig:phase_shift}
\end{figure}

\section{Discussion}

\subsection{From Superposition to Orthogonality Constraints}

Superposition can encode many features in limited dimension \cite{elhage}. However, logic-like operations impose separability constraints: if a representation must reliably distinguish mutually exclusive predicates across multi-step chains, then stable class-like regions (basins) become advantageous. Theorem~\ref{thm:mixture_rank} shows that even a simple class-mixture model yields a strict low-rank + isotropic structure, producing spectral gaps and effective-rank collapse without assuming any particular power law.

\subsection{TCOs as Dynamical Objects}

We define TCOs in a way that is compatible with both contraction and free-energy sharpening.

\begin{definition}[Transient Class Object (TCO)]
Fix a depth interval $[l_a,l_b]$. A set $\TCO\subset\R^d$ is a \emph{TCO} over $[l_a,l_b]$ if:
\begin{enumerate}
\item (\textbf{Approximate invariance}) Typical trajectories enter and remain near $\TCO$:
\[
\Pr\big(\dist(h^{(l)},\TCO)\le \varepsilon \ \forall l\in[l_a,l_b]\big)\ge 1-\delta.
\]
\item (\textbf{Transverse contraction}) Near $\TCO$, the dynamics contract in $d-k$ directions for some $k\ll d$ (as in Hypothesis~\ref{hyp:block}).
\item (\textbf{Class stability}) There exists a measurable map $\pi:\TCO\to\mathcal{C}$ into a discrete label set $\mathcal{C}$ such that $\pi(h^{(l)})$ is stable under small input perturbations with high probability.
\end{enumerate}
\end{definition}

\subsection{Why $\gamma_c\approx 0.42$ Might Be Stable Across Scales}

A stable fraction $\gamma_c$ across architectures suggests a \emph{depth-normalized} mechanism: as earlier blocks build features and intermediate blocks align attention and context, a later regime transitions into ``slot stabilization.'' In this lens, chain-of-thought prompting can be interpreted as an external field that biases energy landscapes (via attention energies), lowering barriers to basin selection.

\section{Conclusion}

We provided an expanded, mathematically explicit framework linking emergent reasoning in LLMs to a phase transition in latent geometry. Our contributions include: (i) a thermodynamic variational characterization of attention (free energy minimization), (ii) RMT baselines (Marchenko--Pastur bulk) and spike-based structure, (iii) sufficient conditions for spectral collapse via transverse contraction, and (iv) rigorous mixture-model results showing that discrete class structure implies low-rank signal eigenvalues. Under this view, \emph{Transient Class Objects} are stable basins created by an RG-like depth flow that contracts irrelevant directions while preserving a low-dimensional semantic skeleton.

\bibliographystyle{plain}

\appendix

\section{Additional Notes on Estimation and Robustness}

\subsection{Sampling noise and MP comparisons}
In practice $B$ (batch size) is finite and activations are not i.i.d. Nevertheless, MP comparisons remain useful as a qualitative null: a broad bulk with a clear edge plus separated outliers is strong evidence of low-rank signal superposed on noise.

\subsection{Quantization and spectral stability}
If quantization noise is approximately isotropic, it mostly lifts the bulk and slightly blurs the edge, while leaving large outliers and large gaps detectable. This is precisely the regime in which our ``liquid vs solid'' diagnostics remain informative.

\section{Latent Signature Dataset}
\begin{center}
\scriptsize
\begin{longtable}{lcc|lcc|lcc}
\caption{Latent Signature Dataset} \\
\toprule
\textbf{Model} & \textbf{L} & \textbf{Obj.} & \textbf{Model} & \textbf{L} & \textbf{Obj.} & \textbf{Model} & \textbf{L} & \textbf{Obj.} \\
\midrule
\endfirsthead
Qwen-1.5B & 0 & 0.31 & MiroThinker-30B & 33 & 0.94 & Fimbulvetr-11B & 14 & 0.64 \\
Qwen-1.5B & 1 & 0.36 & MiroThinker-30B & 34 & 0.93 & Fimbulvetr-11B & 15 & 0.56 \\
Qwen-1.5B & 2 & 0.39 & MiroThinker-30B & 35 & 0.92 & Fimbulvetr-11B & 16 & 0.66 \\
Qwen-1.5B & 3 & 0.39 & MiroThinker-30B & 36 & 0.90 & Fimbulvetr-11B & 17 & 0.65 \\
Qwen-1.5B & 4 & 0.37 & MiroThinker-30B & 37 & 0.95 & Fimbulvetr-11B & 18 & 0.60 \\
Qwen-1.5B & 5 & 0.43 & MiroThinker-30B & 38 & 0.98 & Fimbulvetr-11B & 19 & 0.63 \\
Qwen-1.5B & 6 & 0.41 & MiroThinker-30B & 39 & 0.91 & Fimbulvetr-11B & 20 & 0.81 \\
Qwen-1.5B & 7 & 0.45 & MiroThinker-30B & 40 & 0.92 & Fimbulvetr-11B & 21 & 0.86 \\
Qwen-1.5B & 8 & 0.41 & MiroThinker-30B & 41 & 0.92 & Fimbulvetr-11B & 22 & 0.84 \\
Qwen-1.5B & 9 & 0.41 & MiroThinker-30B & 42 & 0.98 & Fimbulvetr-11B & 23 & 0.87 \\
Qwen-1.5B & 10 & 0.45 & MiroThinker-30B & 43 & 0.95 & Fimbulvetr-11B & 24 & 0.88 \\
Qwen-1.5B & 11 & 0.48 & MiroThinker-30B & 44 & 0.90 & Fimbulvetr-11B & 25 & 0.86 \\
Qwen-1.5B & 12 & 0.52 & MiroThinker-30B & 45 & 0.87 & Fimbulvetr-11B & 26 & 0.88 \\
Qwen-1.5B & 13 & 0.48 & MiroThinker-30B & 46 & 0.91 & Fimbulvetr-11B & 27 & 0.84 \\
Qwen-1.5B & 14 & 0.54 & MiroThinker-30B & 47 & 0.85 & Fimbulvetr-11B & 28 & 0.92 \\
Qwen-1.5B & 15 & 0.46 & Llama-3-8B & 0 & 0.46 & Fimbulvetr-11B & 29 & 0.85 \\
Qwen-1.5B & 16 & 0.50 & Llama-3-8B & 1 & 0.45 & Fimbulvetr-11B & 30 & 0.92 \\
Qwen-1.5B & 17 & 0.51 & Llama-3-8B & 2 & 0.45 & Fimbulvetr-11B & 31 & 0.86 \\
Qwen-1.5B & 18 & 0.53 & Llama-3-8B & 3 & 0.51 & Fimbulvetr-11B & 32 & 0.96 \\
Qwen-1.5B & 19 & 0.52 & Llama-3-8B & 4 & 0.47 & Fimbulvetr-11B & 33 & 0.91 \\
Qwen-1.5B & 20 & 0.52 & Llama-3-8B & 5 & 0.56 & Fimbulvetr-11B & 34 & 0.92 \\
Qwen-1.5B & 21 & 0.59 & Llama-3-8B & 6 & 0.50 & Fimbulvetr-11B & 35 & 0.95 \\
Qwen-1.5B & 22 & 0.54 & Llama-3-8B & 7 & 0.49 & Fimbulvetr-11B & 36 & 0.91 \\
Qwen-1.5B & 23 & 0.60 & Llama-3-8B & 8 & 0.52 & Fimbulvetr-11B & 37 & 0.99 \\
Qwen-1.5B & 24 & 0.62 & Llama-3-8B & 9 & 0.61 & Fimbulvetr-11B & 38 & 0.94 \\
Qwen-1.5B & 25 & 0.60 & Llama-3-8B & 10 & 0.53 & Fimbulvetr-11B & 39 & 0.98 \\
Qwen-1.5B & 26 & 0.68 & Llama-3-8B & 11 & 0.54 & Fimbulvetr-11B & 40 & 0.93 \\
Qwen-1.5B & 27 & 0.67 & Llama-3-8B & 12 & 0.60 & Fimbulvetr-11B & 41 & 0.89 \\
MiroThinker-30B & 0 & 0.42 & Llama-3-8B & 13 & 0.66 & Fimbulvetr-11B & 42 & 0.95 \\
MiroThinker-30B & 1 & 0.45 & Llama-3-8B & 14 & 0.66 & Fimbulvetr-11B & 43 & 0.97 \\
MiroThinker-30B & 2 & 0.52 & Llama-3-8B & 15 & 0.61 & Fimbulvetr-11B & 44 & 0.96 \\
MiroThinker-30B & 3 & 0.46 & Llama-3-8B & 16 & 0.60 & Fimbulvetr-11B & 45 & 0.87 \\
MiroThinker-30B & 4 & 0.45 & Llama-3-8B & 17 & 0.69 & Fimbulvetr-11B & 46 & 0.90 \\
MiroThinker-30B & 5 & 0.52 & Llama-3-8B & 18 & 0.68 & Fimbulvetr-11B & 47 & 0.86 \\
MiroThinker-30B & 6 & 0.47 & Llama-3-8B & 19 & 0.70 & Gemma-2-2B & 0 & 0.39 \\
MiroThinker-30B & 7 & 0.51 & Llama-3-8B & 20 & 0.67 & Gemma-2-2B & 1 & 0.37 \\
MiroThinker-30B & 8 & 0.56 & Llama-3-8B & 21 & 0.70 & Gemma-2-2B & 2 & 0.33 \\
MiroThinker-30B & 9 & 0.54 & Llama-3-8B & 22 & 0.71 & Gemma-2-2B & 3 & 0.36 \\
MiroThinker-30B & 10 & 0.58 & Llama-3-8B & 23 & 0.73 & Gemma-2-2B & 4 & 0.37 \\
MiroThinker-30B & 11 & 0.54 & Llama-3-8B & 24 & 0.70 & Gemma-2-2B & 5 & 0.36 \\
MiroThinker-30B & 12 & 0.59 & Llama-3-8B & 25 & 0.81 & Gemma-2-2B & 6 & 0.38 \\
MiroThinker-30B & 13 & 0.61 & Llama-3-8B & 26 & 0.80 & Gemma-2-2B & 7 & 0.45 \\
MiroThinker-30B & 14 & 0.64 & Llama-3-8B & 27 & 0.77 & Gemma-2-2B & 8 & 0.43 \\
MiroThinker-30B & 15 & 0.59 & Llama-3-8B & 28 & 0.79 & Gemma-2-2B & 9 & 0.50 \\
MiroThinker-30B & 16 & 0.65 & Llama-3-8B & 29 & 0.82 & Gemma-2-2B & 10 & 0.44 \\
MiroThinker-30B & 17 & 0.62 & Llama-3-8B & 30 & 0.85 & Gemma-2-2B & 11 & 0.47 \\
MiroThinker-30B & 18 & 0.62 & Llama-3-8B & 31 & 0.88 & Gemma-2-2B & 12 & 0.50 \\
MiroThinker-30B & 19 & 0.69 & Fimbulvetr-11B & 0 & 0.49 & Gemma-2-2B & 13 & 0.48 \\
MiroThinker-30B & 20 & 0.90 & Fimbulvetr-11B & 1 & 0.41 & Gemma-2-2B & 14 & 0.49 \\
MiroThinker-30B & 21 & 0.86 & Fimbulvetr-11B & 2 & 0.43 & Gemma-2-2B & 15 & 0.57 \\
MiroThinker-30B & 22 & 0.80 & Fimbulvetr-11B & 3 & 0.46 & Gemma-2-2B & 16 & 0.55 \\
MiroThinker-30B & 23 & 0.85 & Fimbulvetr-11B & 4 & 0.49 & Gemma-2-2B & 17 & 0.53 \\
MiroThinker-30B & 24 & 0.81 & Fimbulvetr-11B & 5 & 0.53 & Gemma-2-2B & 18 & 0.55 \\
MiroThinker-30B & 25 & 0.85 & Fimbulvetr-11B & 6 & 0.50 & Gemma-2-2B & 19 & 0.61 \\
MiroThinker-30B & 26 & 0.88 & Fimbulvetr-11B & 7 & 0.57 & Gemma-2-2B & 20 & 0.58 \\
MiroThinker-30B & 27 & 0.91 & Fimbulvetr-11B & 8 & 0.53 & Gemma-2-2B & 21 & 0.60 \\
MiroThinker-30B & 28 & 0.85 & Fimbulvetr-11B & 9 & 0.59 & Gemma-2-2B & 22 & 0.58 \\
MiroThinker-30B & 29 & 0.93 & Fimbulvetr-11B & 10 & 0.57 & Gemma-2-2B & 23 & 0.61 \\
MiroThinker-30B & 30 & 0.88 & Fimbulvetr-11B & 11 & 0.53 & Gemma-2-2B & 24 & 0.66 \\
MiroThinker-30B & 31 & 0.86 & Fimbulvetr-11B & 12 & 0.58 & Gemma-2-2B & 25 & 0.61 \\
MiroThinker-30B & 32 & 0.89 & Fimbulvetr-11B & 13 & 0.59 & & &  \\

\end{longtable}
\end{center}

\end{document}